\let\BLS=\baselinestretch
\newcommand{\singlespacing}{\let\CS=\@currsize\renewcommand{\baselinestretch}{1}\small\CS}
\newcommand{\doublespacing}{\let\CS=\@currsize\renewcommand{
\baselinestretch}{1.5}\small\CS}
\newcommand{\normalspacing}{\let\CS=\@currsize\renewcommand{\baselinestretch}{\BLS}\small\CS}
\newtheorem{proposition}{Proposition}[section]
\newtheorem{remark}{Remark}%[section]
\newtheorem*{remark*}{Remark} %For having unnumbered remarks.
\providecommand{\keywords}[1]{\textbf{\textit{Keywords:~}} #1}
\newcommand{\bfbeta}{\mbox{\boldmath $\beta$}}
\newcommand{\bfSigma}{\mbox{\boldmath $\Sigma$}}
\newcommand{\bflambda}{\mbox{\boldmath $\lambda$}}
\newcommand{\bfgamma}{\mbox{\boldmath $\gamma$}}
\newcommand{\bfepsilon}{\mbox{\boldmath $\varepsilon$}}
\newcommand{\X}{\mathbf{X}}
\newcommand{\x}{\mathbf{x}}
\newcommand{\y}{\mathbf{y}}
\newcommand{\diag}{{\mbox{diag}}}
\newcommand{\tr}{{\mbox{tr}}}
\newcommand{\expit}{{\mbox{expit}}}
\DeclareMathOperator*{\argmin}{\arg\!\min}
\begin{document}

% \doublespacing % <-- if you want to doublespace your document

\title{Weighted Orthogonal Components Regression Analysis}

\author{\textbf{Xiaogang Su}\footnote{Email:
\texttt{xsu@utep.edu}} \\
Department of Mathematical Sciences \\
University of Texas, El Paso, TX 79968 \vspace{.1in} \\
~\textbf{Yaa Wonkye} \\
Department of Mathematics and Statistics, \\
Bowling Green State University, Bowling Green, OH 43403 \vspace{.1in} \\
\textbf{ Pei Wang} and  \textbf{Xiangrong Yin} \\
Department of Statistics \\
University of Kentucky, Lexington, KY 40536}
\date{} \maketitle

\renewcommand{\abstractname}{\large Abstract}
\begin{abstract}
{\normalsize In the multiple linear regression setting, we propose
a general framework, termed weighted orthogonal components
regression (WOCR), which encompasses many known methods as special
cases, including ridge regression and principal components
regression. WOCR makes use of the monotonicity inherent in
orthogonal components to parameterize the weight function. The
formulation allows for efficient determination of tuning
parameters and hence is computationally advantageous. Moreover,
WOCR offers insights for deriving new better variants.
Specifically, we advocate weighting components based on their
correlations with the response, which leads to enhanced predictive
performance. Both simulated studies and real data examples are
provided to assess and illustrate the advantages of the proposed
methods.}
\end{abstract}

\keywords{AIC; BIC; GCV; Principal components regression; Ridge regression.}

\section{Introduction}

Consider the typical multiple linear regression setting where the
available data $\mathcal{L} := \{(y_i, \mathbf{x}_i):
i=1, \ldots, n\}$ consist of $n$ i.i.d.~copies of the continuous response $y$ and the predictor vector $\x \in \mathbb{R}^p.$ Without loss of generality (WLOG), we assume
$y_i$'s are centered and $x_{ij}$'s are standardized throughout
the article. Thus the intercept term is presumed to be 0 in linear
models, for which the general form is given by $\mathbf{y} =
\mathbf{X} \bfbeta + \bfepsilon$ with $\mathbf{y}=\left( y_i
\right)$ and $\bfepsilon \, \sim \, \left(\mathbf{0}, \sigma^2
\mathbf{I}_n \right).$ For the sake of convenience, we sometimes
omit the subscript $i.$ When the $n \times p$ design matrix
$\mathbf{X}$ is of full column rank $p$, the ordinary least
squares (OLS) estimator $\widehat{\bfbeta} = (\mathbf{X}^T
\mathbf{X})^{-1} \mathbf{X}^T \mathbf{y}$, as well as its
corresponding predicted value $\hat{y}(\x) = \mathbf{x}^T
\widehat{\bfbeta}$ at a new observation $\mathbf{x}$, enjoys many
attractive properties.

However, OLS becomes problematic when $\mathbf{X}$ is
rank-deficient, in which case the Gram matrix
$\mathbf{X}^T\mathbf{X}$ is singular. This may happen either
because of multicollinearity when the predictors are
highly correlated or because of high dimensionality
when $p \gg n.$ A wealth of proposals have been made to combat the
problem. Besides others, we are particularly concerned with a
group of techniques that include ridge regression (RR;
\citeauthor{Hoerl.1970}, \citeyear{Hoerl.1970}), principal
components regression (PCR; \citealp{Massy.1965}), partial least
squares  regression (PLSR; \citeauthor{Wold.1966},
\citeyear{Wold.1966} \& \citeyear{Wold.1978}), and continuum
regression (CR; \citealp{Stone.1990}). One common feature of these
approaches lies in the fact that they first extract
orthogonal or uncorrelated components that are linear combinations
of $\mathbf{X}$ and then regress the response directly on the
orthogonal components. The number of orthogonal components doesn't
exceed $n$ and $p$, hence reducing the dimensionality. This is the
key how these types of methods approach high-dimensional or
multicollinear data.

In this article, we first introduce a general framework, termed
weighted orthogonal components regression (WOCR), which puts the
aforementioned methods into a unified class. Compared to the
original predictors in $\X$, there is a natural ordering in the
orthogonal components. This information allows us to parameterize
the weight function in WOCR with low-dimensional parameters, which
are essentially the tuning parameters, and estimate the tuning
parameters via optimization. The WOCR formulation also facilitates
convenient comparison of the available methods and suggests their
new natural variants by introducing more intuitive weight
functions.

We shall restrict our attention to PCR and RR models. The
remainder of the article is organized as follows. In Section
\ref{sec-WOCR}, we introduce the general framework of WOCR.
Section \ref{sec-WOCR-examples} exemplifies the applications of
WOCR with RR and PCR. More specifically, we demonstrate how WOCR
formulation can be used to estimate the tuning parameter in RR and
select the number of principal components in PCR, and then
introduce their better variants on the basis of WOCR. Section
\ref{sec-simulation} presents numerical results from simulated
studies that are designed to illustrate and assess WOCR and make
comparisons with others. We also provide real data illustrations
in Section \ref{sec-real-data}. Section \ref{sec-discussion}
concludes with a brief discussion, including the implication of
WOCR on PLSR and CR models.

\section{Weighted Orthogonal Components Regression (WOCR)}
\label{sec-WOCR}

Denote $m=\mbox{rank}(\X)$ so that $m \leq (p \wedge n).$ Let
$\{\mathbf{u}_1, \ldots, \mathbf{u}_m\}$ be the orthogonal
components extracted in some principled way, satisfying that
$\mathbf{u}_j^T \mathbf{u}_{j'} = 0$ if $j\neq j'$ and 1
otherwise. Here $\{\mathbf{u}_j \}_{j=1}^m$ forms an orthonormal
basis of the column space of $\X$, $\mathcal{C}(\X) = \{\X
\mathbf{a}:~ \mbox{for some~} \mathbf{a} \in \mathbb{R}^p\}.$
Since $\mathbf{u}_j \in \mathcal{C}(\X)$, suppose $\mathbf{u}_j =
\X \mathbf{a}_j$ for $j=1, \ldots, m.$ The condition
$\mathbf{u}_j^T \mathbf{u}_{j'} = 0$ implies that $\mathbf{a}_j^T
\X^T \X \mathbf{a}_{j'} =0$, i.e., vectors $\mathbf{a}_j$ and
$\mathbf{a}_{j'}$ are $\X^T \X$ orthogonal, which implies that
$ \mathbf{a}_j$ and $\mathbf{a}_{j'}$ are orthogonal if, furthermore, $\mathbf{a}_j$ or $\mathbf{a}_{j'}$ is an eigenvector of $\X^T \X$ associated with a non-zero eigenvalue. In matrix form, let
$\mathbf{U}_{n \times m} = [\mathbf{u}_1, \ldots, \mathbf{u}_m]$,
$\mathbf{W}_{m \times m}=\mbox{diag}(a_j),$ and $\mathbf{A}_{p
\times m} = [\mathbf{a}_1, \ldots, \mathbf{a}_m]$. We have
$\mathbf{U}= \mathbf{XA}$ with $\mathbf{U}^T\mathbf{U}
=\mathbf{I}_m$ but it is not necessarily true that $\mathbf{U}\mathbf{U}^T = \mathbf{I}_n.$ The
construction of matrix $\mathbf{A}$ may (e.g., in RR and PCR) or
may not (e.g., in PLSR and CR) depend on the response
$\mathbf{y};$ again, our discussion will be restricted to the former scenario. It is worth noting that extracting $m$ components
reduces the original $n \times p$ problem into an $n \times m$
(with $m \leq n)$ problem, hence making automatic dimension
reduction.

\subsection{Model Specification}
The general form of a WOCR model can be conveniently expressed in
terms of the fitted vector
\begin{equation}
\tilde{\mathbf{y}} = \sum_{j=1}^m w_j \gamma_j \mathbf{u}_j,
\label{fitted-WOCR}
\end{equation}
where $\gamma_j = \langle \y , \mathbf{u}_j \rangle$ is the
regression coefficient and $0 \leq w_j \leq 1$ is the weight for
the $j$-th orthogonal component $\mathbf{u}_j$.  In matrix form,
(\ref{fitted-WOCR}) becomes
\begin{equation}
\tilde{\mathbf{y}} = \mathbf{UWU}^T \mathbf{y} = \mathbf{XAWU}^T
\mathbf{y}. \label{fitted-WOCR-matrix}
\end{equation}
We will see that RR, PCR, and many others are all special cases of
the above WOCR specification, with different choices of
$\{\mathbf{u}_j, w_j\}.$ For example, if $w_j =1$ or
$\mathbf{W}=\mathbf{I}_m$, then (\ref{fitted-WOCR}) amounts to the
least square fitting since $\mathcal{C}(\mathbf{U}) = \mathcal{C}(\mathbf{X}).$

This WOCR formulation allows us to conveniently study its general
properties. It follows immediately from (\ref{fitted-WOCR-matrix})
that the associated hat matrix $\mathbf{H}$ is
\begin{equation}
\mathbf{H} = \mathbf{UWU}^T = \mathbf{XAWU}^T. \label{hat-WOCR}
\end{equation}
The resultant sum of square errors (SSE) is given by $\mbox{SSE}
~=~  \parallel \tilde{\y} - \y \parallel^2  ~=~ \y^T (\mathbf{I}_n
- \mathbf{H})^2 \y.$ Note that $\mathbf{H}$ is not an idempotent
or projection matrix in general, neither is $(\mathbf{I} -
\mathbf{H}).$ Instead,
$$ (\mathbf{I-H})^2 = \mathbf{I-2 H + H^2} = \mathbf{I} -
\mathbf{U}(2 \mathbf{W} - \mathbf{W}^2) \mathbf{U}^T.$$ The
diagonal matrix $(2 \mathbf{W} - \mathbf{W}^2)$ has diagonal
element $\{1-(1-w_j)^2\}$. Therefore,
\begin{eqnarray}
\mbox{SSE} &=& \mathbf{y}^T \y - \y^T \mathbf{U} \diag\left\{ 1-
(1-w_j)^2 \right\} \mathbf{U}^T \y \nonumber \\
&=& \parallel \y  \parallel^2 - \sum_{j=1}^m \left( 2 w_j - w_j^2
\right) \gamma_j^2. \label{SSE}
\end{eqnarray}
From (\ref{fitted-WOCR-matrix}), the WOCR estimate of $\bfbeta$ is
\begin{equation} \widetilde{\bfbeta} = \mathbf{AWU}^T \y.
\label{beta-WOCR}
\end{equation}
It follows that, given new data matrix $\mathbf{X}'$, the
predicted vector is
\begin{equation}
\widetilde{\textbf{y}}'=\mathbf{X}'\widetilde{\bfbeta} =
\mathbf{X'AWU}^T \y.
\label{y-pred}
\end{equation}
Although not further pursued here, many other quantities and
properties of WOCR can be derived accordingly with the generic
form, including $\mbox{E} \parallel \widetilde{\bfbeta} - \bfbeta
\parallel^2$ as studied in \cite{Hoerl.1970} and
\citet{Hwang.2003}.

\subsection{Parameterizing the Weights}

The next important component in specifying WOCR is to parameterize the
weights in $\mathbf{W}$ in a principled way. The key motivation
stems from the observation that, compared to the original
regressors in $\X$, the orthogonal components in $\mathbf{U}$ are
naturally ordered in terms of some measure. This ordering may be
attributed to some specific variation that each $\mathbf{u}_j$ is
intended to account for. Another natural ordering is based on the
coefficients $\{|\gamma_j|\}_{j=1}^m.$ Because of orthogonality,
the regression coefficient $\gamma_j$ remains the same for
$\mathbf{u}_j$ in both the simple regression and multiple
regression settings.

This motivates us to parameterize the weights $w_j$ based on the
ordering measure. It is intuitive to assign more weights to more
important components. To do so, $w_j$ can be specified as a
function monotone in the ordering measure and parameterized with a
low-dimensional vector $\bflambda.$ Two such examples are given in
Figure \ref{fig01}. Among many other choices, the usage of sigmoid
functions will be advocated in this article because they provide a
smooth approximation to the 0-1 threshold indicator function that
is useful for the component selection purpose and they are also
flexible enough to adjust for achieving improved prediction
accuracy. In general, we denote $w_j = w_j(\bflambda).$ The vector
$\bflambda$ in the weight function are essentially the tuning
parameters. This parameterization conveniently expands these
conventional modeling methods by providing several natural WOCR
variants that are more attractive, as illustrated in the next
section.

\begin{figure}[h]
\centering
  \includegraphics[scale=0.6, angle=270]{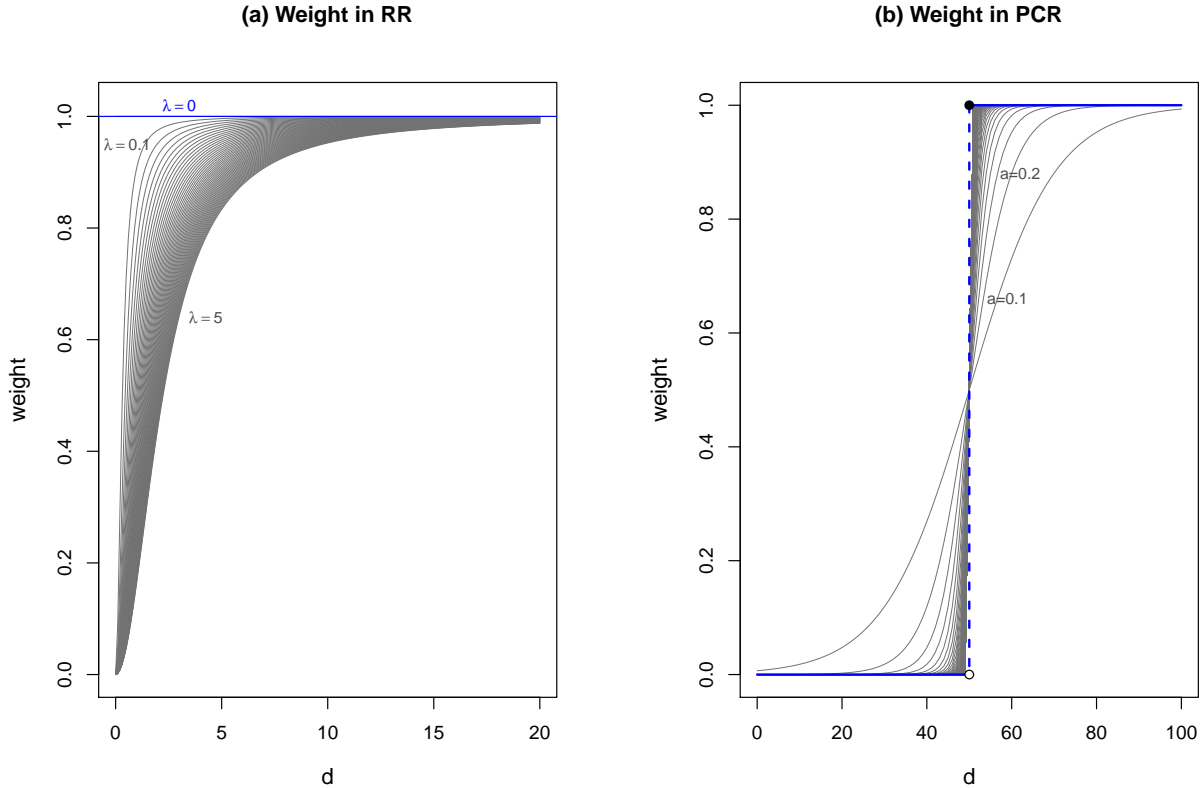}
  \caption{Plot of the weights used in ridge regression (RR)
  and principal components regression (PCR) as a function of the
  singular values $d_j$ of $\X$: (a) $w(d) = d^2/(d^2 + \lambda)$
  in RR for $\lambda = 0.0, 0.1, 0.2, \ldots, 5.0;$ (b) the discrete
  threshold $w(d) = I(x \geq c)$ in PCR with $c=50.0,$ approximated
  with the expit weight $w(d) = \expit\{ a \, (d-c) \}$ for
  $a= \{0.1, 0.2, \ldots, 50.0 \}$. \label{fig01}}
\end{figure}

Determining the tuning parameters $\bflambda$ is yet another
daunting task. In common practice, one fits the model for a number
of fixed tuning parameters and then resorts to cross-validation or
a model selection criterion to compare the model fittings. This
can be computationally intensive, especially with big data. When a
model selection criterion is used, WOCR provides a computationally
efficient way of determining the tuning parameter $\bflambda$.
% and a natural $M$-estimation mechanism for making inference on $\bflambda$.
The key idea is to plug the specification (\ref{fitted-WOCR}) in a
model selection criterion and optimize with respect to
$\bflambda.$ Depending on the scenarios, commonly used model
selection criteria include the Akaike information criterion (AIC;
\citealp{Akaike.1974}), the generalized cross-validation (GCV;
\citealp{Golub.1979}), and the Bayesian information criterion
(BIC; \citealp{Schwarz.1978}). What is involved in these model
selection criteria are essentially SSE and the degrees of freedom
(DF). A general form of SSE is given by (\ref{SSE}). For DF, we
follow the generalized definition by \citet{Efron.2004}:
\begin{equation}
\mbox{DF}(\bflambda) = \mathbb{E} \left\{ \tr(d \hat{\y} / d
\mathbf{y}) \right\}. \label{DF-Efron}
\end{equation}
If neither the components $\mathbf{U}$ nor the weights $w_j$
depends on $\mathbf{y},$ then DF, often termed as the effective
degrees of freedom (EDF) in this scenario, is computed as
\begin{equation} \label{EDF}
\mbox{EDF} = \tr(\mathbf{H}) = \tr(\mathbf{UWU}^T) =
\tr(\mathbf{WU}^T\mathbf{U}) = \tr(\mathbf{W}) = \sum_{j=1}^m w_j.
\end{equation}
With either components $\mathbf{U}$ or the weights $w_j$ depends
on $\mathbf{y},$ the computation of DF is more difficult and will
be treated on a case-by-case basis.

The specific forms of GCV, AIC, and BIC can be obtained
accordingly. We treat the model selection as an objective function
for $\bflambda$. The best tuning parameter $\hat{\bflambda}$ can
then be estimated by optimization. Since $\bflambda$ is of low
dimension, the optimization can be solved efficiently. This saves
the computational cost in selecting the tuning parameter.

\section{WOCR Examples}
\label{sec-WOCR-examples}
We show how several conventional models relate to WOCR with
different weight specifications and different ways of constructing
the orthogonal components $\mathbf{U}=\mathbf{XA}$ and then how
the WOCR formulation can help improve and expand them.
\iffalse
Whether or
not the response $\mathbf{y}$ has been used in defining the matrix
$\mathbf{A}$ is crucial, because, if this is the case (e..g, in
PLSR and CR), the associated statistical inference becomes much
more convoluted.
\fi
In this section, we first discuss how WOCR helps determine the
optimal tuning parameter $\lambda$ in ridge regression and make
inference accordingly. Next, we show that WOCR facilitates an
efficient computational method for selecting the number of
components in PCR. The key idea is to approximate the 0-1
threshold function with a smooth sigmoid weight function. Several
natural variants of RR and PCR that are advantageous in predictive
modeling are then derived within the WOCR framework.

\subsection{Pre-Tuned Ridge Regression}
\label{sec-ridge}

The ridge regression \citep{Hoerl.1970} can be formulated as a
penalized least square optimization problem
$$ \min_{\bfbeta} ~ \parallel \mathbf{y} - \mathbf{X} \bfbeta
\parallel^2 + \lambda \, \parallel \bfbeta \parallel^2,$$
with the tuning parameter $\lambda.$ The solution yields the ridge
estimator $\widehat{\bfbeta}_R = \left( \mathbf{X}^T \mathbf{X} +
\lambda \, \mathbf{I}_p \right)^{-1} \mathbf{X} \mathbf{y}.$

The singular value decomposition (SVD) of data matrix $\mathbf{X}$
offers a useful insight into RR \citep[see, e.g.,][]{Hastie.2009}.
Suppose that the SVD of $\mathbf{X}$ is given by
\begin{equation}
\mathbf{X} = \mathbf{UDV}^T = \sum_{j=1}^m d_j \mathbf{u}_j
\mathbf{v}_j, \label{SVD-X}
\end{equation}
where both $\mathbf{U}=\left[\mathbf{u}_1, \ldots, \mathbf{u}_m
\right] \in \mathbb{R}^{n \times m}$ and
$\mathbf{V}=\left[\mathbf{v}_1, \ldots, \mathbf{v}_m \right] \in
\mathbb{R}^{p \times m}$ have orthonormal column vectors that form
an orthonormal basis for the column space
$\mathcal{C}(\mathbf{X})$ and the row space
$\mathcal{C}(\mathbf{X}^T)$ of $\mathbf{X}$, respectively, and
matrix $\mathbf{D} = \mbox{daig} \left( d_j \right)$ with singular
values satisfying $d_1 \geq d_2 \geq \cdots \geq d_m >0.$ Noticing
that $\mathbf{X}^T\mathbf{X} = \mathbf{V}
\mathbf{D}^2 \mathbf{V}^T,$ the column vectors of $\mathbf{V}$
yield the principal directions. Since $\mathbf{X} \mathbf{v}_j =
d_j \mathbf{u}_j$,  it can be seen that $\mathbf{u}_j$ is the
$j$-th normalized principal component.

The fitted vector in RR conforms well to the general form
(\ref{fitted-WOCR}) of WOCR, as established by the following
proposition.
\begin{proposition}
Regardless of the magnitude of $\{n, p, m\}$, the fitted vector
$\hat{\mathbf{y}}_R = \mathbf{X} \widehat{\bfbeta}_R$ in ridge
regression can be written as
\begin{equation}
\hat{\mathbf{y}}_R = \mathbf{X} \left( \mathbf{X}^T \mathbf{X} +
\lambda \, \mathbf{I}_p \right)^{-1} \mathbf{X} \mathbf{y}  =
\mathbf{UWU}^T \mathbf{y} = \sum_{j=1}^m \frac{d_j^2}{d_j^2 +
\lambda} \, \langle \mathbf{y}, \mathbf{u}_j \rangle \,
\mathbf{u}_j, \label{RR-WOCR}
\end{equation}
with $\mathbf{W} = \diag \left\{ w_j\right\}$ and $w_j = d_j^2
/(d_j^2 + \lambda)$ for $j=1, \ldots, m.$ \label{prop-fitted-RR}
\end{proposition}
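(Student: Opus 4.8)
The plan is to start from the closed form $\hat{\mathbf{y}}_R = \mathbf{X}(\mathbf{X}^T\mathbf{X} + \lambda\mathbf{I}_p)^{-1}\mathbf{X}^T\mathbf{y}$, substitute the SVD $\mathbf{X} = \mathbf{U}\mathbf{D}\mathbf{V}^T$ from (\ref{SVD-X}), and reduce the sandwiched inverse to a diagonal reweighting of the components $\mathbf{u}_j$. The one genuine subtlety---and the reason the proposition stresses ``regardless of the magnitude of $\{n, p, m\}$''---is that in the rank-deficient regime $m < p$ (or $m < n$) the thin SVD factors are not square orthogonal matrices: one has $\mathbf{U}^T\mathbf{U} = \mathbf{V}^T\mathbf{V} = \mathbf{I}_m$, but $\mathbf{U}\mathbf{U}^T \neq \mathbf{I}_n$ and $\mathbf{V}\mathbf{V}^T \neq \mathbf{I}_p$. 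A naive substitution that treats $\mathbf{V}\mathbf{V}^T$ as the identity would silently assume full rank and produce the wrong inverse, so the argument must account for the complementary subspace on which $\mathbf{X}^T\mathbf{X}$ acts as zero while the ridge term $\lambda\mathbf{I}$ acts alone.

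My first step would be to complete $\mathbf{V}$ to a full $p \times p$ orthogonal matrix $[\mathbf{V}, \mathbf{V}_\perp]$, where the columns of $\mathbf{V}_\perp$ span the null space of $\mathbf{X}$. Writing $\mathbf{X}^T\mathbf{X} = \mathbf{V}\mathbf{D}^2\mathbf{V}^T$ in this full basis block-diagonalizes $\mathbf{X}^T\mathbf{X} + \lambda\mathbf{I}_p$ into $\diag(\mathbf{D}^2 + \lambda\mathbf{I}_m,\, \lambda\mathbf{I}_{p-m})$, whose inverse is immediate. I would then left- and right-multiply by $\mathbf{X}$ and $\mathbf{X}^T$; because $\mathbf{X}\mathbf{V}_\perp = \mathbf{U}\mathbf{D}\mathbf{V}^T\mathbf{V}_\perp = \mathbf{0}$, the entire $\mathbf{V}_\perp$ block is annihilated, and using $\mathbf{X}\mathbf{V} = \mathbf{U}\mathbf{D}$ the expression collapses to $\mathbf{U}\,\mathbf{D}(\mathbf{D}^2 + \lambda\mathbf{I}_m)^{-1}\mathbf{D}\,\mathbf{U}^T$. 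The middle factor is diagonal with entries $d_j^2/(d_j^2 + \lambda)$, which is exactly $\mathbf{W}$, delivering $\hat{\mathbf{y}}_R = \mathbf{U}\mathbf{W}\mathbf{U}^T\mathbf{y}$ and, read off component-wise, the stated sum.

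An alternative I would keep in reserve is the push-through identity $(\mathbf{X}^T\mathbf{X} + \lambda\mathbf{I}_p)^{-1}\mathbf{X}^T = \mathbf{X}^T(\mathbf{X}\mathbf{X}^T + \lambda\mathbf{I}_n)^{-1}$, verified by clearing the inverses on both sides via $(\mathbf{X}^T\mathbf{X} + \lambda\mathbf{I}_p)\mathbf{X}^T = \mathbf{X}^T(\mathbf{X}\mathbf{X}^T + \lambda\mathbf{I}_n)$. This moves the computation to the $n \times n$ Gram matrix $\mathbf{X}\mathbf{X}^T = \mathbf{U}\mathbf{D}^2\mathbf{U}^T$ and has the advantage of matching the way the components live on the $n$-side; the same basis-completion trick, now completing $\mathbf{U}$ to $[\mathbf{U}, \mathbf{U}_\perp]$, then yields $\mathbf{X}\mathbf{X}^T(\mathbf{X}\mathbf{X}^T + \lambda\mathbf{I}_n)^{-1} = \mathbf{U}\mathbf{D}^2(\mathbf{D}^2 + \lambda\mathbf{I}_m)^{-1}\mathbf{U}^T = \mathbf{U}\mathbf{W}\mathbf{U}^T$.

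The main obstacle is purely the bookkeeping around rank deficiency: ensuring that the $\lambda\mathbf{I}$ term is correctly inverted on the null space and then confirming that this contribution drops out after multiplication by $\mathbf{X}$. Once the basis is completed (or the push-through identity is invoked), the remaining algebra is routine diagonal arithmetic, and nothing about the relative sizes of $n$, $p$, and $m$ enters beyond this point---which is precisely the uniformity the proposition claims.
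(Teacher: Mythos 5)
Your main argument is correct and is essentially the paper's own proof: the paper likewise completes $\mathbf{V}$ to an orthogonal $\mathbf{V}'\in\mathbb{R}^{p\times p}$ (padding $\mathbf{U}$ and $\mathbf{D}$ with zero blocks to $\mathbf{U}_0$ and $\mathbf{D}_0$), writes $\mathbf{X}^T\mathbf{X}+\lambda\mathbf{I}_p = \mathbf{V}'\left(\mathbf{D}_0^2+\lambda\mathbf{I}_p\right)\mathbf{V}'^T$, inverts blockwise, and lets multiplication by $\mathbf{X}$ annihilate the null-space contribution, which is exactly your basis-completion step. Your reserve alternative via the push-through identity $(\mathbf{X}^T\mathbf{X}+\lambda\mathbf{I}_p)^{-1}\mathbf{X}^T = \mathbf{X}^T(\mathbf{X}\mathbf{X}^T+\lambda\mathbf{I}_n)^{-1}$ is also valid and differs from the paper, but since your primary route matches, no further comparison is needed.
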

\begin{proof}% [Proof of Proposition (\ref{prop-fitted-RR})]~~ 
The proof when $m=p$ (i.e., $p < n$ and hence $\mathbf{V}^{-1}=\mathbf{V}$) can be found in, e.g., \citet{Hastie.2009}. We consider the general case including the $p \gg n$ scenario. With the
general SVD form (\ref{SVD-X}) of $\mathbf{X}$, we have
$\mathbf{U}^T \mathbf{U} = \mathbf{V}^T \mathbf{V}= \mathbf{I}_m$,
but it is not necessarily true that $\mathbf{U} \mathbf{U}^T =
\mathbf{I}_n$, nor for $\mathbf{V} \mathbf{V}^T = \mathbf{I}_p.$

First, plugging the SVD of $\mathbf{X}$ into
$\hat{\mathbf{y}}_{\tiny{R}}$ yields
\begin{equation}
\hat{\mathbf{y}}_{\tiny{R}} = \mathbf{UDV}^T \left(
\mathbf{VD}^2\mathbf{V}^T + \lambda \mathbf{I}_p \right)^{-1}
\mathbf{VDU}^T \mathbf{y}.
 \label{yhat-RR}
\end{equation}
Define $\mathbf{V}' = \left[ \mathbf{v}_1, \cdots, \mathbf{v}_m,
\mathbf{v}_{m+1}, \ldots, \mathbf{v}_p \right] \in \mathbf{R}^{p
\times p}$ by completing an orthonormal basis for $\mathbb{R}^p.$
Hence $\mathbf{V}$ is invertible with $\mathbf{V}^{-1} =
\mathbf{V}^T.$ Also define $\mathbf{U}_0 = \left[ \mathbf{U},
\mathbf{O} \right] \in \mathbf{R}^{p \times p}$ and $\mathbf{D}_0
= \diag\{ d_1, \ldots, d_m, 0, \ldots, 0 \} \in \mathbf{R}^{p
\times p}$ by appending 0 matrix $\mathbf{O}$ or components to
$\mathbf{U}$ and $\mathbf{D}.$ Then it can be easily checked that
$\hat{\mathbf{y}}_{\tiny{R}}$ in (\ref{yhat-RR}) can be rewritten
as
\begin{eqnarray*}
\hat{\mathbf{y}}_{\tiny{R}} &=& \mathbf{U_0D_0V'}^T \left(
\mathbf{V'D_0}^2\mathbf{V'}^T + \lambda \mathbf{I}_p \right)^{-1}
\mathbf{V'D_0U_0}^T \mathbf{y} \\
&=& \mathbf{U_0D_0V'}^T \left\{ \mathbf{V'} \left(\mathbf{D}_0^2 +
\lambda \mathbf{I}_p \right) \mathbf{V'}^T \right\}^{-1}
\mathbf{V'D_0U_0}^T \mathbf{y} \\
&=& \mathbf{U_0D_0V'}^T  \mathbf{V'} \left(\mathbf{D}_0^2 +
\lambda \mathbf{I}_p \right)^{-1} \mathbf{V'}^{-1} \mathbf{V'}
\mathbf{D_0U_0}^T \mathbf{y} \\
&=& \mathbf{U_0D_0} \left(\mathbf{D}_0^2 + \lambda \mathbf{I}_p
\right)^{-1} \mathbf{D_0U_0}^T \mathbf{y} \\
&=& \mathbf{UWU}^T \mathbf{y}
\end{eqnarray*}
with $\mathbf{W} = \diag\left\{d_j^2/(d_j^2 + \lambda) \right\}.$
\end{proof}

One natural ordering of the principal components $\mathbf{u}_j$s
is based on their associated singular values $d_j$. Hence, the
weight function $w_j = w(d_j; \lambda)=d_j^2/(d_j^2 + \lambda)$ is
monotone in $d_j$ and parameterized with one single parameter
$\lambda.$ See Figure \ref{fig01}(a) for a graphical illustration
of this weight function. In view of $\mathbf{XV} = \mathbf{UD}$,
matrix $\mathbf{A}$ in WOCR is given as $\mathbf{A} =
\mathbf{VD}^{-1}.$

Since RR is most useful for predictive modeling without
considering component selection, GCV is an advisable criterion for
selecting the best tuning parameter $\hat{\lambda}.$ With our WOCR
approach, we first plugging (\ref{RR-WOCR}) into GCV to form an
objective function for $\lambda$ and then optimize it with respect
to $\lambda$. On the basis of (\ref{SSE}) and (\ref{EDF}), the
specific form of $\mbox{GCV}(\lambda)$ is given up to some
irrelevant constant, by
\begin{equation}
\mbox{GCV}(\lambda) ~\propto~ \frac{SSE}{(n- EDF)^2} =
\frac{\parallel \mathbf{y}
\parallel^2 - \sum_{j=1}^m  (w_j^2 - 2 w_j) \, \gamma_j^2}{(n - \sum_{j=1}^m
w_j)^2}, \label{GCV-I}
\end{equation}
GCV has a wide applicability even in the ultra-high
dimensions. Alternatively, AIC can be used instead. If $ \lim_{n \rightarrow \infty} m/n = 0,$ GCV is asymptotically equivalent to $\mbox{AIC}(\lambda) ~\propto~ n \,
\ln (\mbox{SSE}) + 2 \cdot \mbox{EDF}.$

The best tuning parameter in RR can be estimated as $\hat{\lambda}
= \argmin_{\lambda} \mbox{GCV}(\lambda).$ Bringing $\hat{\lambda}$ back to $\widehat{\bfbeta}_R$ yields the final RR estimator. Since the tuning parameter is determined beforehand, we call this method `pre-tuning'. We denote
this pre-tuned RR method as $\mbox{RR}(d; \lambda),$ where the
first argument $d$ indicates the ordering on which basis the
components are sorted and the second argument indicates the tuning
parameter $\lambda$. We shall use this as a generic notation for
other new WOCR models. As we shall demonstrate with simulation in
Section \ref{sec-simulation-ridge}, $\mbox{RR}(d; \lambda)$
provides nearly identical fitting results to RR; however,
pre-tuning dramatically improves the computational efficiency,
especially when dealing with massive data.

\begin{remark}
One statistically awkward issue with regularization is selection
of the tuning parameter. First of all, this is a one-dimensional
optimization problem, yet done in a poor way in current practice by
selecting a grid of values and evaluating the objective function
at each value. The pre-tuned version helps amend this deficiency.
Secondly, although the tuning parameter $\lambda$ is often
selected in a data-adaptive way and hence clearly is a statistic,
no statistical inference is made for the tuning parameter unless
within the Bayesian setting. The above pre-tuning method yields a
convenient way of making inference on $\lambda.$ Since the
objective function $\mbox{GCV}(\lambda)$ is smooth in $\lambda$,
the statistical properties of $\hat{\lambda}$ follow well through
standard M-estimation arguments. However, this is not the theme of
WOCR, thus we shall not pursue further.
\end{remark}

\subsection{Pre-Tuned PCR}
\label{sec-PCR}

PCR regresses the response on the first $k$ ($1 \leq k \leq m$)
principal components as given by the SVD of $\mathbf{X}$ in
(\ref{SVD-X}). The fitted vector in PCR can be rewritten as
$$ \hat{\y}_{PCR} = \sum_{j=1}^k  \langle \y, \mathbf{u}_j \rangle
\mathbf{u}_j = \sum_{j=1}^m \delta_j \, \gamma_j \,
\mathbf{u}_j,$$ where $\gamma_j =\langle \y, \mathbf{u}_j \rangle$
and $\delta_j = I(j \leq k)$ for $j=1, \ldots, m.$ Clearly, PCR
can be put in the WOCR form with $w_j = \delta_j.$ Conventionally,
the ordering of principal components is aligned with the singular
values $\{d_j\};$ thus we may rewrite $\delta_j =\delta(d_j; c)=
I(d_j \geq c)$ with a threshold value $c = d_k$ if $k$ is known.
Either the number of components $k$ or the threshold $c$ is the
tuning parameter. Selecting the optimal $k$ by examining many PCR
models is a discrete process.

To facilitate pre-tuning, we replace the indicator weight
$\delta(x; c) = I(x \geq c)$ with a smooth sigmoid function. While
many other choices are available, it is convenient to use the
logistic or expit function $ \pi(x)= \mbox{expit}(x) =  \{ 1 +
\exp(-x)\}^{-1} $ so that
\begin{equation}
w_j = \pi(d_j; a, c) = \mbox{expit}\{a (d_j -c)\}. \label{expit}
\end{equation}
Figure \ref{fig01}(b) plots $\mbox{expit}\{ a (x - c)\}$ with
$c=50.0$ for different choices of $a$. It can be seen that a
larger $a$ value yields a better approximation to the indicator
function $I(x \geq 0),$ while a smaller $a$ yields a smoother
function which is favorable for optimization. In order to emulate
PCR, the parameter $a$ can be fixed \textit{a priori} at a
relatively large value. Our numerical studies shows that the performance of the method is quite robust with respect to the choice of $a$. On that basis, we recommend fixing $a$ in the range of $[10, 50].$

Since PCR involves selection of the optimal number of PCs, BIC,
given by $\mbox{BIC}(\bflambda) ~\propto~ n \, \ln (\mbox{SSE}) +
\ln(n) \cdot \mbox{DF},$ is selection-consistent \citep{Yang.2005}
and often has a superior empirical performance in variable
selection. The hat matrix $\mathbf{H}$ in PCR is idempotent, so is
$\mathbf{I}_n- \mathbf{H}.$ Thus the SSE can be reduced a little
bit as $\mathbf{y}^T(\mathbf{I}_n- \mathbf{H}) \y$, which then can
be approximated by substituting $\delta(d_j; c)$ with $\pi(d_j; a,
c).$ The DF can be approximately in a similar way as $DF = k =
\sum_j \delta(d_j; c) \approx \sum_j \pi(d_j; a, c).$ This results
in the following form for BIC
\begin{equation}
\mbox{BIC}(c) ~\propto~ n \, \ln \left( \parallel \y
\parallel^2 - \sum_{j=1}^m w_j \gamma_j^2 \right) + \ln(n) \,
\sum_{j=1}^m w_j, \label{BIC-c-PCR}
\end{equation}
which is treated as an objective function of $c$. We estimate the
best cutoff point $\hat{c}$ by optimizing $\mbox{BIC}(c)$ with
respect to $c$. This is a one-dimensional smooth optimization problem
with a search range $c \in [d_1, d_m].$ Once $\hat{c}$ is available, we use it as a threshold to select the components and fit a regular PCR. We denote this pre-tuned PCR approach as $\mbox{PCR}(d; a).$ Compared to the discrete
selection in PCR, $\mbox{PCR}(d; a)$ is computationally more
efficient. Furthermore, it performs better in selecting the
correct number of components, especially when weak signals are
present. This is an additional benefit of smoothing as opposed to the discrete selection process in PCR, as we will demonstrate with simulation.

\subsection{WOCR Variants of RR and PCR Models}
\label{sec-variants}

Not only can many existing models be cast into the WOCR framework,
but it also suggests new favorable variants. We explore some of them.
One first variant of PCR is leave both $a$ and $c$ free in
(\ref{BIC-c-PCR}). More specifically, we first obtain $(\hat{a},
\hat{c}) = \argmin_{a, c} \mbox{BIC}(a, c)$ and then compute the
WOCR fitted vector in (\ref{fitted-WOCR}) with weight $w_j =
\mbox{exp}\{ \hat{a}(d_j - \hat{c}) \}$ for $j=1, \ldots, m.$ This
will give PCR more flexibility and adaptivity and hence may lead
to improved predictive power. In this approach, selecting components is no longer a concern; thus GCV or AIC can be used as the objective function instead. We denote this approach as $\mbox{PCR}(d_j; a, c).$

The principal components are constructed independently from the
response. \citet{Artemiou.2009} and \citet{Ni.2011} argued that
the response tends to be more correlated with the leading
principal components; this is usually not the case in reality,
however. See, e.g., \citet{Jollife.1982} and \citet{Hadi.1998} for
real-life data illustrations. Nevertheless, there has not been a
principled way to deal with this issue in PCR. WOCR can provide a
convenient solution: one simply bases the ordering of
$\mathbf{u}_j$ on the regression coefficients $\gamma_j$ and
defines the weights $w_j$ via a monotone function of $|\gamma_j|$
or, preferably, $\gamma^2_j$. However, doing so will induce
dependence on the response to the weights. As a result, the
associated DF has to be computed differently, as established in
Proposition \ref{prop-df}.
\begin{proposition}
Suppose that the WOCR model (\ref{fitted-WOCR}) has orthogonal
components $\mathbf{u}_j$ constructed independently of $\y$ and
weights $w_j = w(\gamma^2_j; \bflambda),$ where $w(\cdot)$ is a
smooth monotonically increasing function and $\bflambda$ is the
parameter vector. Its degrees of freedom (DF) can be estimated as
\begin{equation}
\widehat{\mbox{DF}} = \sum_{j=1}^m \, (2 \gamma_j^2 \dot{w}_j +
w_j), \label{DF-RR-PCR-variants}
\end{equation}
where $\dot{w}_j = d w(\gamma^2_j; \bflambda) / d(\gamma_j^2).$
\label{prop-df}
\end{proposition}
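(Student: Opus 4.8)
The plan is to apply Efron's generalized definition of degrees of freedom from (\ref{DF-Efron}) directly to the fitted vector $\tilde{\y} = \sum_{j=1}^m w_j \gamma_j \mathbf{u}_j$, treating the components $\mathbf{u}_j$ as fixed (they are built independently of $\y$) while carefully tracking that both the coefficients $\gamma_j = \mathbf{u}_j^T \y$ and the weights $w_j = w(\gamma_j^2; \bflambda)$ are now functions of $\y$. The crucial observation is that, unlike the EDF computation in (\ref{EDF}), the fitted map is no longer a fixed linear operator: because $w_j$ depends on $\y$ through $\gamma_j^2$, one must differentiate the entire map $\y \mapsto \tilde{\y}$ and cannot simply read off $\tr(\mathbf{H}) = \sum_j w_j$.

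First I would compute the Jacobian $\partial \tilde{\y} / \partial \y^T$. Since each $\gamma_j$ is linear in $\y$ with $\partial \gamma_j / \partial \y = \mathbf{u}_j$, the chain and product rules give, for the $j$-th scalar coefficient,
\begin{equation}
\frac{\partial}{\partial \y}\bigl\{ w(\gamma_j^2)\, \gamma_j \bigr\} = \bigl( w_j + 2\gamma_j^2 \dot{w}_j \bigr)\, \mathbf{u}_j, \nonumber
\end{equation}
where differentiating $\gamma_j$ contributes the $w_j$ term and differentiating $w(\gamma_j^2)$ contributes $2\gamma_j^2 \dot{w}_j$ via $d\,w(\gamma_j^2)/d\gamma_j = 2\gamma_j \dot{w}_j$. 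Assembling the contributions over $j$, the Jacobian becomes the symmetric matrix $\sum_{j=1}^m \bigl( w_j + 2\gamma_j^2 \dot{w}_j \bigr)\, \mathbf{u}_j \mathbf{u}_j^T$.

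Taking the trace and invoking the orthonormality $\mathbf{u}_j^T \mathbf{u}_j = 1$, each rank-one block contributes exactly its scalar coefficient, so $\tr\bigl( \partial \tilde{\y}/\partial \y^T \bigr) = \sum_{j=1}^m \bigl( w_j + 2\gamma_j^2 \dot{w}_j \bigr)$. Efron's DF is the expectation of this trace, and the stated $\widehat{\mbox{DF}}$ in (\ref{DF-RR-PCR-variants}) is simply the plug-in (observed) version obtained by dropping the expectation. I do not anticipate a serious technical obstacle, as the computation is an elementary chain-rule exercise; the one point deserving emphasis is conceptual rather than computational. The additional term $2\gamma_j^2 \dot{w}_j$ is precisely the price paid for letting the weights ``peek'' at the response through $\gamma_j^2$, and it is what separates this DF from the naive $\sum_j w_j$ of the fixed-weight case. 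I would also remark that when $w$ does not depend on $\y$ (so $\dot{w}_j = 0$) the formula collapses back to the EDF expression in (\ref{EDF}), providing a useful sanity check.
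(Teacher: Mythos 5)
Your proof is correct and follows essentially the same route as the paper's: both apply the chain and product rules to obtain the Jacobian $\sum_{j=1}^m (2\gamma_j^2 \dot{w}_j + w_j)\, \mathbf{u}_j \mathbf{u}_j^T = \mathbf{U}\,\diag(2\gamma_j^2 \dot{w}_j + w_j)\,\mathbf{U}^T$, take its trace via the orthonormality of the $\mathbf{u}_j$, and treat the observed trace as the plug-in estimate of Efron's definition (\ref{DF-Efron}). Your added remarks---that the $2\gamma_j^2\dot{w}_j$ term is the price of response-dependent weights and that the formula reduces to the EDF in (\ref{EDF}) when $\dot{w}_j = 0$---are sound and, if anything, slightly more explicit than the paper's own exposition.
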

\begin{proof}%[Proof of Proposition (\ref{prop-df})]~~ 
The WOCR model in this case is $\hat{y} = \sum_{j=1}^m w_j \gamma_j
\mathbf{u}_j,$ with $\gamma_j = \mathbf{u}_j^T \y$ and $w_j =
w(\gamma_j^2; \bflambda).$ It follows by chain rule that $$\frac{d
\hat{\y}}{d \y} \, = \, \sum_{j=1}^n (2 \gamma_j^2 \dot{w}_j +
w_j) \mathbf{u}_j \mathbf{u}_j^T \, = \, \mathbf{U} \diag(2
\gamma_j^2 \dot{w}_j + w_j) \mathbf{U}^T.
$$
Following the definition of DF by \citet{Efron.2004}, an estimate  is given by
$$ \tr \left(\frac{d \hat{\y}}{d \y} \right) \, = \, \diag(2 \gamma_j^2
\dot{w}_j + w_j) \mathbf{U}^T \mathbf{U}  \, = \, \sum_{j=1}^m (2
\gamma_j^2 \dot{w}_j + w_j),$$ which completes the proof.
\end{proof}

Clearly both PCR and RR can be benefited from this reformulation.
As a variant of RR, the weight now becomes $w_j = w(\gamma_j^2;
\lambda) = \gamma_j^2/ (\gamma_j^2 + \lambda)$ and hence
$\dot{w}_j = \lambda /(\gamma_j^2 + \lambda)^2.$ It follows that
the estimated DF is $$\widehat{\mbox{DF}} = \sum_{j=1}^m
(\gamma_j^4 + 3 \lambda \gamma_j^2)/(\gamma_j^2 + \lambda)^2.$$ The
best tuning parameter $\hat{\lambda}$ can be obtained by
minimizing GCV. Using similar notations as earlier, we denote this
RR variant as $\mbox{RR}(\gamma; \lambda).$ It is worth noting
that $\mbox{RR}(\gamma; \lambda)$ is, in fact, not a ridge
regression model. Its solution can no longer be
nicely motivated by a regularized or constrained least square
optimization problem as in the original RR. But what really
matters in these methods is the predictive power.
By directly formulating the fitted values $\hat{\y}$,
the WOCR model (\ref{fitted-WOCR}) facilitates a direct and
flexible model specification that focuses on prediction.

\renewcommand{\tabcolsep}{4.pt}
\renewcommand{\arraystretch}{1.1}
% \renewcommand{\baselinestretch}{.9}
% \begin{landscape}
\begin{table}[h]
\centering \caption{WOCR Variants of ridge regression (RR) and
principal components regression (PCR) models, both based on the
normalized principal components $\{\mathbf{u}_j: j=1, \ldots p\}$.
\label{tbl-WOCR-Models}} \vspace{.2in}
% \rule{5.5in}{.03cm}
% \begin{small}
\begin{tabular}{lclcc} \hline \hline
& Component & & Tuning & Suggested WOCR \\
Model & Ordering & Weights & Parameter & Objective Function \\ \hline
$\mbox{RR}(d; \lambda)$ & $d_j$ & $w_j = d_j^2/(d_j^2 + \lambda)$ & $\lambda$ &  $\mbox{GCV}(\lambda)$ \\
$\mbox{RR}(\gamma; \lambda)$ & $\gamma_j^2$ & $w_j = \gamma_j^2/(\gamma_j^2 + \lambda)$ & $\lambda$ &  $\mbox{GCV}(\lambda)$ \\
$\mbox{PCR}(d; c)$ & $d_j$ & $w_j = \mbox{expit}\{ a (d_j -c) \}$ with fixed $a$ & $c$ & $\mbox{BIC}(c)$ \\
$\mbox{PCR}(d; a, c)$ & $d_j$ & $w_j = \mbox{expit}\{ a (d_j -c) \}$ & $a, c$ & $\mbox{GCV}(a, c)$ \\
$\mbox{PCR}(\gamma; c)$ & $\gamma_j^2$ & $w_j = \mbox{expit}\{ a ( \gamma^2_j -c) \}$ with fixed $a$ & $c$ & $\mbox{BIC}(c)$ \\
$\mbox{PCR}(\gamma; a, c)$ & $\gamma_j^2$ & $w_j = \mbox{expit}\{ a ( \gamma^2_j -c) \}$ & $a, c$ & $\mbox{GCV}(a, c)$ \\ \hline
\end{tabular}
% \rule{8.2in}{.03cm}
% \end{small}
\end{table}
% \end{landscape}

For PCR, the weight becomes $w_j = \pi(\gamma_j^2; a, c).$  Hence,
$\dot{w}_j = a w_j (1-w_j)$ and $$\widehat{\mbox{DF}} =
\sum_{j=1}^m w_j (2ar_j^2 +1 - 2 a w_j r_j^2).$$ Depending on
whether or not we want to select components, we may fix $a$ at a
larger value or leave it free. This results in two PCR variants,
which we denote as $\mbox{PCR}(\gamma_j^2; c)$ and
$\mbox{PCR}(\gamma_j^2; a, c),$ respectively.

Table \ref{tbl-WOCR-Models} summarizes the WOCR models that we
have discussed so far. Among them, $\mbox{RR}(d_j; \lambda)$ and
$\mbox{PCR}(d_j^2; c)$ resemble the conventional RR and PCR, yet
with pre-tuning. Depending on the analytic purpose, we also
suggest a preferable objective function for each WOCR model. In general, we have recommended using GCV for predictive purposes, in which scenarios AIC can be used as an alternative. AIC is equivalent to GCV if $\lim_{n \rightarrow \infty} p/n = 0$, both being selection-efficient in the sense prescribed by \citet{Shibata.1981}. On the other hand, if selecting components is desired, using BIC is recommended.

% \iffalse
\begin{remark}
It is worth noting that the WOCR model $\mbox{PCR}(\gamma_j^2; c)$
has a close connection with the MIC (Minimum approximated
Information Criterion) sparse estimation method of
\citet{Su.2015}, \citet{Su.2016}, and \citet{Su.2017}. MIC yields
sparse estimation in the ordinary regression setting by solving a
$p$-dimensional smooth optimization problem
$$ \min_{\bfgamma}~~~ n \ln \parallel \y - \mathbf{X} \mathbf{W} \bfgamma \parallel^2 ~+~ \ln(n) \, \tr(\mathbf{W}),$$
where $\mathbf{W} = \diag \left( w_j \right)$ with diagonal element $w_j= \tanh(a \gamma_j^2)$ approximating the indicator function $I(\gamma_j \neq 0).$ Comparatively, $\mbox{PCR}(\gamma_j^2; c)$ solves a one-dimensional optimization problem
$$ \min_{c}~~~ n \ln \parallel \y - \mathbf{U} \mathbf{W} \bfgamma \parallel^2 ~+~ \ln(n) \, \tr(\mathbf{W}),$$
where $\mathbf{W} = \diag \left( w_j \right)$ with diagonal element $w_j=\expit\{a (\gamma_j^2 -c)\}$ approximating $I(\gamma_j^2 \geq c).$
The substantial simplification in $\mbox{PCR}(\gamma_j^2; c)$ is because of the orthogonality of the design matrix $\mathbf{U}.$ Hence the coefficient estimates $\bfgamma$ in multiple regression are the same as those in simple regression and can be computed ahead. Furthermore, the orthogonal regressors $\mathbf{u}_j$, i.e., the columns of $\mathbf{U}$, are naturally ordered by $\gamma_j^2$. This allows us to formulate a one-parameter smooth approximation to the indicator function $I(\gamma_j^2 \geq c),$ which induces selection of $\mathbf{u}_j$ in this PCR variant.
\end{remark}
% \fi

\subsection{Implementation: R Package \textbf{WOCR}}
\label{sec-implementation}
The proposed WOCR method is implemented in an R package
\textbf{WOCR}. The current version is hosted on GitHub at
\url{https://github.com/xgsu/WOCR}. 

The main function \texttt{WOCR()} has an argument \texttt{model=} with values in \texttt{RR.d.lambda},
\texttt{RR.gamma.lambda}, \texttt{PCR.d.c}, \texttt{PCR.gamma.c},
\texttt{PCR.d.a.c}, and \texttt{PCR.gamma.a.c}, which corresponds to the six WOCR variants as listed in Table \ref{tbl-WOCR-Models}. Among them, $\mbox{RR}(d; \lambda)$, $\mbox{RR}(\gamma; \lambda)$, $\mbox{PCR}(d; c)$, and  $\mbox{PCR}(\gamma; c)$ involves one-dimensional smooth optimization. This can be solved via the \citet{Brent.1973} method, which is conveniently available in the R function \texttt{optim()}. Owing to the nonconvex nature, dividing the search range of the decision variable can be helpful. The other two methods, $\mbox{PCR}(d; a, c)$ and $\mbox{PCR}(\gamma; a, c)$, involve two-dimensional smooth nonconvex optimization. \cite{Mullen.2014} provides a comprehensive comparison of many global optimization algorithms currently available in R \citep{R.2018}. We have followed her suggestion to choose the generalized simulated annealing method \citep{Tsallis.1996}, which is available from the R package \textbf{GenSA} \citep{Xiang.2013}.  More details of the implementation can be found from the help file of the \textbf{WOCR} package.

\section{Simulation Studies}
\label{sec-simulation}
This section presents some of the simulation studies that we have conducted to investigate the performance of WOCR models and compare them to other methods.

\subsection{Comparing Ridge Regression with $\mbox{RR}(d; \lambda)$}
\label{sec-simulation-ridge}
We first compare the conventional ridge regression with its pretuned version, i.e., $\mbox{RR}(d_j; \lambda)$. The data are generated as follows. We first simulate the design matrix $\mathbf{X} \in \mathbb{R}^{n \times p}$ from a multivariate normal distribution $N(\mathbf{0}, \, \bfSigma)$ with $\bfSigma=(\sigma_{j j'})$ and $\sigma_{j j'}=\rho^{|j-j'|}$ for $j, j'=1, \ldots, p.$ Apply SVD to extract matrix $\mathbf{U}$ and $\mathbf{D}$. Then we form the mean response as
\begin{equation}
\mbox{Model A:~~~~~~~} \mathbf{y} = \sum_{j=1}^m  b_j \mathbf{u}_j + \bfepsilon \mbox{~with~} m = p \wedge n \mbox{~and~} \bfepsilon \sim \mathcal{N}\left(\mathbf{0}, \sigma^2 \mathbf{I}_n \right),
\label{Simulation-Model}
\end{equation}
where
$$ \mathbf{b} = \left(b_j \right) = \left[m, \, m-1,\,  \ldots,  \, 1\right]^T/10. $$
For each simulated data set, we apply RR (as implemented by the R
function \texttt{lm.ridge}) and  $\mbox{RR}(d; \lambda)$, both
selecting $\lambda$ with minimum GCV.

\begin{figure}[h]
\centering
  \includegraphics[scale=0.6, angle=270]{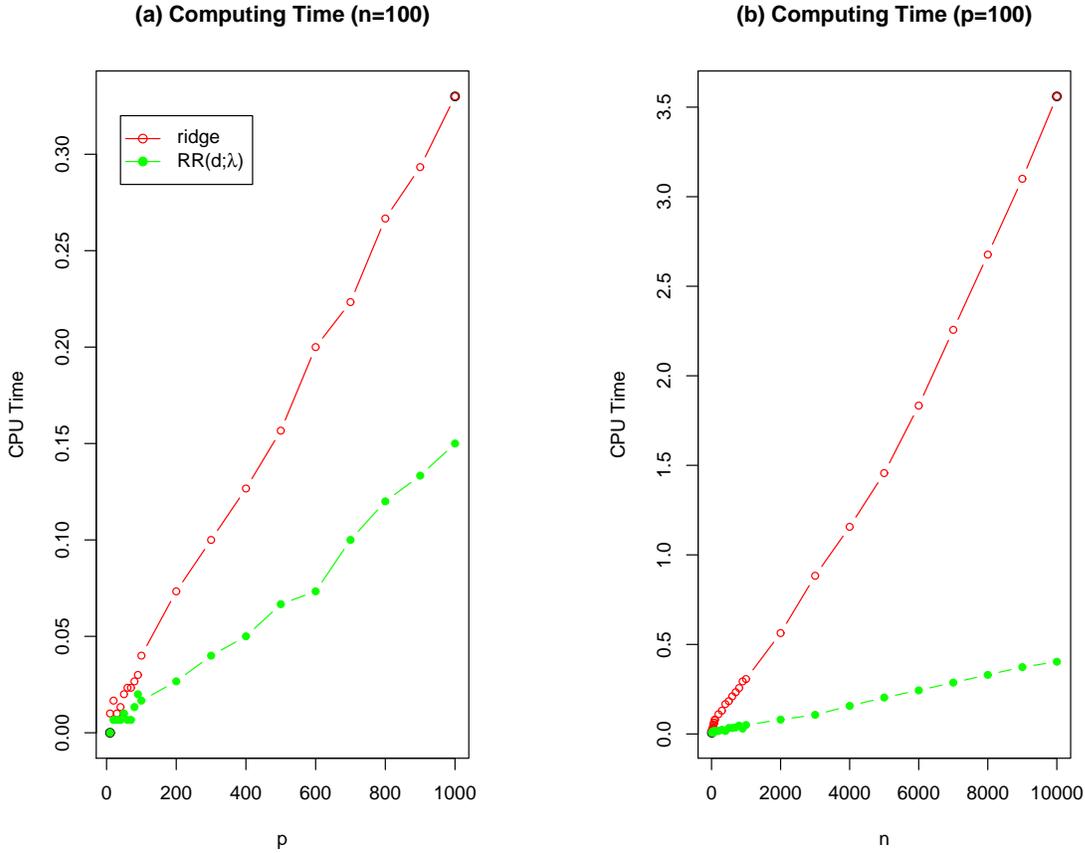}
  \caption{CPU Computing time comparison between ridge regression (RR) and its WOCR variant
  $\mbox{RR}(d; \lambda)$ in selecting the best $\lambda$ via minimum GCV:
  (a) with varying $p$ and fixed $n=100$ and (b) with varying $p$ and fixed $n=100$.
 \label{fig02}}
\end{figure}

To compare, we consider the mean square error (MSE) for
prediction. To this end, a test data set of $n'=500$ is generated
in advance. The fitted RR and $\mbox{RR}(d; \lambda)$ from each
simulation run will be applied to the test set and the MSE is
obtained accordingly. The `best' tuning parameter $\hat{\lambda}$
is also recorded. We only report the results for the setting
$\rho=0.5$, $\sigma^2 =1,$ $p=100$, $\mathbf{b} = \left(b_j
\right) = \left(p, \, p-1,\,  \ldots,  \, 1\right)^T/10.$ Two
sample sizes $n \in \{50, 500\}$ are considered. For each model
configuration, a total of $200$ simulation runs are considered.

In the simulation, we found how to specify the search points could
be a problem in the current practice of ridge regression.
Initially, we found the ridge regression gave inferior performance
compared to $\mbox{RR}(d; \lambda)$ in many scenarios. However,
after adjusting its search range, the results became nearly
identical to what $\mbox{RR}(d; \lambda)$ had. This point will be
further illustrated in Section \ref{sec-simulation-predictive}. It
is also worth noting that the minimum GCV tends to select a very
small $\lambda$ in the ultra-high dimensional case with $p > n.$

To demonstrate the computational advantages of $\mbox{RR}(d;
\lambda)$ over RR, we generated data from the same model A in
(\ref{Simulation-Model}). We first fix $n=100$ and let $p$ vary in
$\{10, 20, \ldots, 100, 200, \ldots 1000\}.$ And then we fix
$p=100$ and let $n$ vary in $\{10, 20, \ldots, 100, 200, \ldots,
1000, 2000, \ldots, 10000\}.$ For each setting, we recorded the
CPU computing time for RR and $\mbox{RR}(d; \lambda)$ averaged
from three simulation runs. We have set the search range for
$\lambda$ as $\{0.1, 0.2, \ldots, 100\}.$ The results are plotted
in Figure \ref{fig02}(a) and \ref{fig02}(b). It can be seen that
$\mbox{RR}(d; \lambda)$ is much faster than RR, especially when
either $p$ or $n$ gets large.

\subsection{Comparing PCR with $\mbox{PCR}(d; c)$ and $\mbox{PCR}(\gamma; c)$}
\label{sec-simulation-PCR}

Next we compare the two WOCR variants that are close to PCR. Data of dimension $n=500$ and $p=50$ are generated from Model A in (\ref{Simulation-Model}), yet with two sets of coefficients $\mathbf{b}$ given as follows:
\begin{enumerate}[(i)]
\item $\mathbf{b} =\left[5, 5, 5, 5, 5, 0, 0,  \ldots,  0 \right]^T \in \mathbb{R}^{50};$

\item $\mathbf{b} =\left[0, 0, 0, 0, 5, 0, 0, \ldots,  0 \right]^T \in \mathbb{R}^{50}.$
\end{enumerate}
The first set (i) fits perfectly to ordinary PCR and hence $\mbox{PCR}(d; c)$ with number of useful components being 5, while the second set (ii) corresponds to the situation where the response is only associated with the fifth principal components, a scenario that fits best to $\mbox{PCR}(\gamma; c).$ Recall that the shape parameter $a$ in both $\mbox{PCR}(d; c)$ and $\mbox{PCR}(\gamma; c)$ is fixed at a relatively larger value. Concerning its choice, we consider four values $a \in \{5, 10, 50, 100\}.$ A total of 200 simulation run is made for each configuration. For each simulated data set, the ordinary PCR is fit with minimum cross-validated error, as implemented in R package \textbf{pls} while $\mbox{PCR}(d; c)$ and $\mbox{PCR}(\gamma; c)$ are fit with minimum BIC.  Figure \ref{fig03} plots the number of components selected by each method via boxplot and the MSE for predicting an independent test data set of $n'=500$ generated from the same model setting via mean plus/minus standard error bar plot.

\begin{figure}[h]
\centering
  \includegraphics[scale=0.60, angle=0]{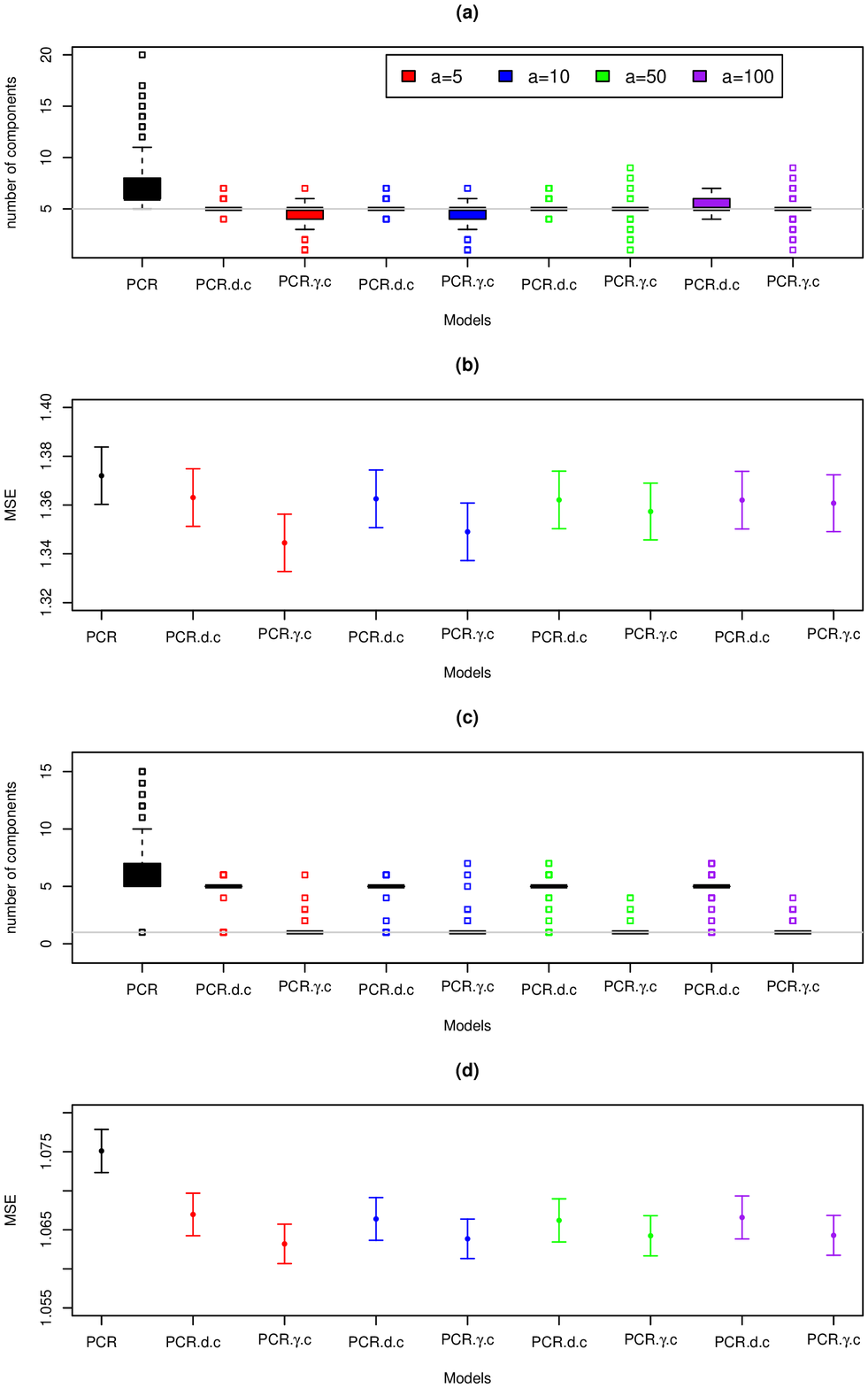}
  \caption{Comparing principal components regression (PCR) and its WOCR variant $\mbox{PCR}(d; c)$ and $\mbox{PCR}(\gamma; c)$ in terms of selecting the number of components and prediction MSE. Data are generated from Model A in (\ref{Simulation-Model}) with $\mathbf{b} =\left[5, 5, 5, 5, 5, 0, 0,  \ldots,  0 \right]^T \in \mathbb{R}^{50}$ in panels (a) and (b) and
$\mathbf{b} =\left[0, 0, 0, 0, 5, 0, 0, \ldots,  0 \right]^T \in
\mathbb{R}^{50}$ in panels (c) and (d), respectively.
\label{fig03}}
\end{figure}

It can be seen that PCR substantially overfits in both model
settings, resulting in high prediction errors as well. In the
first scenario (i), $\mbox{PCR}(d; c)$ and $\mbox{PCR}(\gamma; c)$
both do well with similar performance. In the second scenario
(ii), $\mbox{PCR}(d; c)$ fails in identifying the correct
principal components while $\mbox{PCR}(\gamma; c)$ remains
successful by switching the ordering from singular values $d_j$ to
regression coefficients $\gamma_j^2.$ For the different $a$
choices, the performance of $\mbox{PCR}(d; c)$ and
$\mbox{PCR}(\gamma; c)$ is quite stable with some minor
variations.

\subsection{Predictive Performance Comparisons}
\label{sec-simulation-predictive}

To assess the predictive performance of WOCR models, we generate
data of size $n$ from two nonlinear models used in
\citep{Friedman.1991}, which are given as follows:
\begin{eqnarray}
\mbox{Model B:~~~~~~~~~}  y & =& 0.1 \exp(4x_1) + 4 \expit\{20 (x_2 - 0.5)\} + 3 x_3 + 2 x_4 + x_5 + \varepsilon;  \label{modelB} \\
\mbox{Model C:~~~~~~~~~}  y & =& 10 \sin(\pi x_1 x_2) + 20 (x_3 - 0.5)^2 + 10 x_4 + x_5 + \varepsilon. \label{modelC}
\end{eqnarray}
The covariates of dimension $p$ are independently generated from the uniform[0,1] distribution and the random error term follows $\mathcal{N}(0, 1).$
In both models, only the first five predictors are involved in the mean response function. Two choices of $p \in \{5, 50\}$ are considered with $n=500.$ For each simulated data set, ridge regression, PCR, and six WOCR variants in Table \ref{tbl-WOCR-Models} are applied with default or recommended settings. In particular, we fix the scale parameter $a=50$ in $\mbox{PCR}(d; c)$ and $\mbox{PCR}(\gamma; c).$  To apply ridge regression, we have used $\lambda \in \{0.01, 0.02, \ldots, 200\}.$

Table \ref{tbl2} presents the prediction MSE (mean and SE) and the
median number of selected components by each method, out of 200
simulation runs. First of all, it can be seen that the ridge
regression appears to provide the worst results in terms of MSE. This is because of deficiencies involved in the current practice of ridge regression that computes ridge estimators for a discrete set of $\lambda$ within some specific range, which may not even include the true global GCV minimum. Comparatively, $\mbox{RR}(d; \lambda)$ provides a computationally efficient and reliable way of finding the `best' tuning parameter. We could have refit the ridge regression according to $\hat{\lambda}$ suggested by $\mbox{RR}(d; \lambda).$ Another interesting
observation is that $\mbox{RR}(\gamma; \lambda)$ tends to give
more favorable results than $\mbox{RR}(d; \lambda),$ because
sorting the components according to $|\gamma_j|$ borrows strength
from the association with the response.

Among PCR variants, neither $\mbox{PCR}(d; c)$ nor
$\mbox{PCR}(\gamma; c)$ performs well. On the basis of BIC, they
are aimed to find a parsimonious true model when the true model is
among the candidate models, which, however, is not the case here.
In terms of prediction accuracy, it can be seen that
$\mbox{RR}(\gamma; \lambda)$, $\mbox{PCR}(d; a, c)$, and
$\mbox{PCR}(\gamma; a, c)$ are highly competitive, all yielding
similar performance to PCR. Note that PCR determines the best
tuning parameter via 10-fold cross-validation, while
$\mbox{PCR}(d; a, c)$, and  $\mbox{PCR}(\gamma; a, c)$ are based
on a smooth optimization of GCV and hence are computationally
advantageous. In these simulation settings, PCR has selected all
components and hence simply amounts to the ordinary least square
fitting.

\begin{landscape}
\begin{table}[hp]
\renewcommand{\tabcolsep}{4.pt}
\renewcommand{\arraystretch}{1.2}
\caption{Comparison on predictive accuracy of ridge regression
(RR), principal components regression (PCR) with their six WOCR
variants. Data (with $n=500$) were generated from Models B and C.
Performance measures include the averaged MSE, the standard errors
of MSE, and the median number of selected components by each
method, out of 200 simulation runs for each configuration.}
 \vspace{.2in}
% \rule{5.5in}{.03cm}
\centering
\begin{tabular}{lllccccccccc} \hline \hline
    &       &       &   \multicolumn{9}{c}{Models}                   \\ \cline{4-12}
            &&& $\mbox{RR}(d; \lambda)$ & $\mbox{RR}(\gamma; \lambda)$  & RR    &\cellcolor[gray]{0.8}& $\mbox{PCR}(d; c)$  & $\mbox{PCR}(d; a, c)$ &   $\mbox{PCR}(\gamma; c)$ &   $\mbox{PCR}(\gamma; a, c)$  &   PCR \\ \hline
Model B & $p=5$ &   average--MSE    &   3.130   &   1.895   &   38.844  &\cellcolor[gray]{0.8}& 3.048   &   1.806 & 2.915   &   1.807   &   1.806   \\
    &&  SE--MSE &   0.0074  &   0.0094  &   1.1494  &\cellcolor[gray]{0.8}& 0.0972  &   0.0012  &   0.0577  &   0.0012  &   0.0012  \\
    &&  \# comps    &   5   &   5   &   5   &\cellcolor[gray]{0.8}& 4   &   5   &   2   &   5   &   5   \\ \cline{2-12}
    & $p=50$    &   average--MSE    &   2.485   &   2.057   &   6.051   &\cellcolor[gray]{0.8}& 2.499   &   2.059   &   2.773   &   2.075   &   2.062   \\
    && SE--MSE &    0.0071  &   0.0052  &   0.1654  &\cellcolor[gray]{0.8}& 0.0382  &   0.0053  &   0.0813  &   0.0058  &   0.0054  \\
    &&  \# comps    &   50  &   50  &   50  &\cellcolor[gray]{0.8}& 46  &   50  &   29  &   50  &   50  \\ \hline
Model C &   $p=5$   &   average--MSE    &   10.335  &   6.930   &   192.528 &\cellcolor[gray]{0.8}& 10.356  &   6.644   &   9.403   &   6.644 & 6.645   \\
    &&  SE--MSE & 0.0251    &   0.0345  &   8.8263  &\cellcolor[gray]{0.8}& 0.2830  &   0.0058  &   0.1559  &   0.0059  &   0.0059  \\
    &&  \# comps    &   5   &   5   &   5   &\cellcolor[gray]{0.8}& 4   &   5   &   2   &   5   &   5   \\ \cline{2-12}
    & $p=50$    &   average--MSE    &   9.058   &   7.223   &   54.781  &\cellcolor[gray]{0.8}& 9.257   &   7.210   &   9.617   &   7.226   &   7.229   \\
    &&  SE--MSE &   0.0230  &   0.0188  &   3.0581  &\cellcolor[gray]{0.8}& 0.1745  &   0.0181  &   0.2677  &   0.0185  &   0.0187  \\
    &&  \# comps    &   50  &   50  &   50  &\cellcolor[gray]{0.8}& 44  &   50  &   28.5    &   50  &   50  \\
\hline
\end{tabular}
% \rule{8.2in}{.03cm}
\label{tbl2}
\end{table}
\end{landscape}

\section{Real Data Examples}
\label{sec-real-data}

For further illustration, we apply WOCR to two well-known data sets,
which are \texttt{BostonHousing2} and \texttt{concrete}. The Boston housing data relates to prediction the median value of owner-occupied homes for 506 census tracts of Boston from the 1970 census. We used the corrected version \texttt{BostonHousing2} available from R package \textbf{mlbench} \citep{Leisch.2012}, with dimension $n=506$ observations and $p=17$ predictors.  The \texttt{concrete} data is available from the UCI Machine Learning
Repository (\url{https://archive.ics.uci.edu/ml/datasets/}). 
The goal of this data set is to predict the concrete compressive strength based on a few characteristics of the concrete. The data set has $n=1,030$ observations and $p=8$ continuous predictors.

Figure \ref{fig04} plots the singular values $d_j$ and the regression coefficients in absolute value $|\gamma_j|$ for both data sets. It can be seen that $d_j$ decreases gradually as expected. The bar plot of $|\gamma_j|$, however, shows different patterns. In the \texttt{BostonHousing2} data, the very first component is highly correlated with the response, while others shows alternate weak correlations. In the \texttt{concrete} data, the third component is most correlated with the response, followed by the 6th and 5th principal components. The first two components are only very weakly correlated. This data set shows a good example where the top components are not necessarily the most relevant components in terms of association with the response.

\begin{figure}[h]
\centering
  \includegraphics[scale=0.55, angle=270]{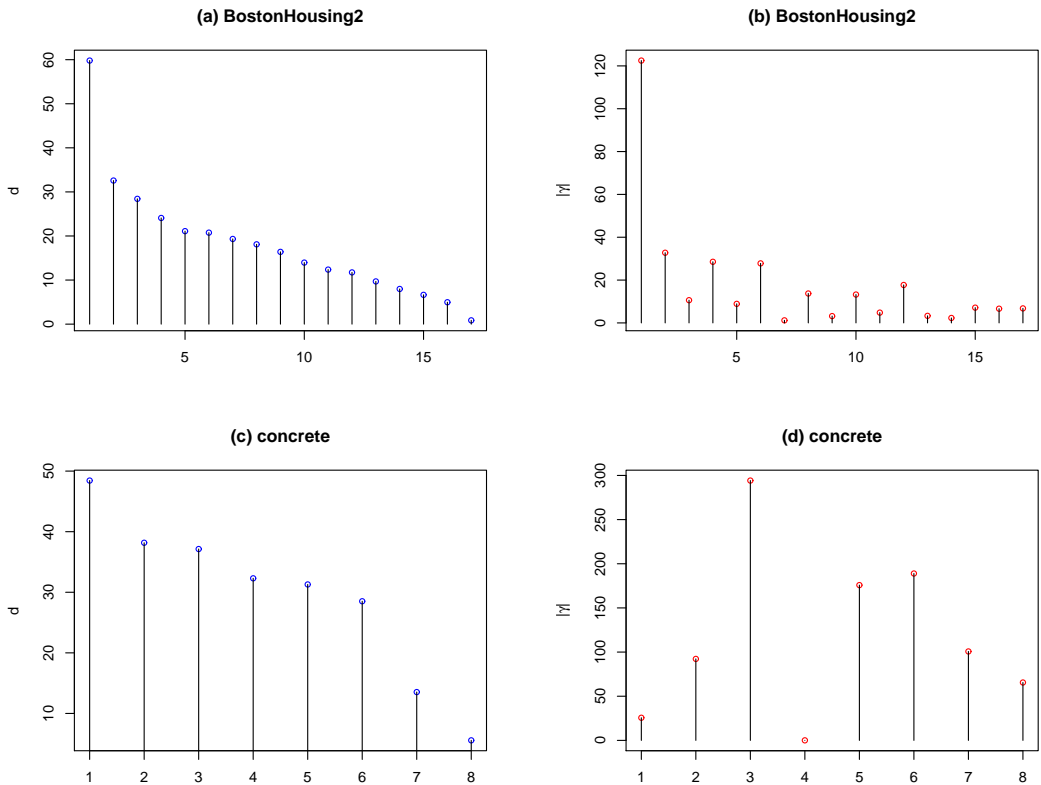}
  \caption{Bar plots of the singular values $d_j$ and the absoluate values of coefficients $|\gamma_j|$ for six real data sets. \label{fig04}}
\end{figure}

To compare different models, a unified approach is taken. We randomly
partition the data into the training set and the test set with a
ratio of approximately 2:1 in sample sizes. The training set is
used to construct models and then the constructed models are
applied to the test set for prediction. The default
settings in Table \ref{tbl-WOCR-Models} are used for each WOCR, while the default 10-fold CV method is used to select the best model for ridge regression and PCR. We repeat this entire procedure for 200 runs. The prediction MSE and the number of components for every method is recorded for each run. The results are summarized in Table \ref{tbl3-datasets}. 

% \begin{landscape}
\begin{table}[h]
\renewcommand{\tabcolsep}{7.5pt}
\renewcommand{\arraystretch}{1.2}
% \renewcommand{\baselinestretch}{1}
% \begin{small}
\caption{Comparison on predictive accuracy of ridge regression
(RR), principal components regression (PCR) with their WOCR
variants on two real data sets: \texttt{BostonHousing2} and \texttt{concrete}. The best performers are
highlighted in boldface.}
 \vspace{.2in}
% \rule{5.5in}{.03cm}
\centering
\begin{tabular}{lccccccc} \hline \hline
& \multicolumn{3}{c}{\texttt{BostonHousing2}} &\cellcolor[gray]{0.8}& \multicolumn{3}{c}{\texttt{concrete}} \\ \cline{2-4} \cline{6-8}
Method & average-MSE & SE-MSE & \# comps &\cellcolor[gray]{0.8}& average-MSE & SE-MSE & \# comps \\ \hline
Ridge	&	15.6630	&	0.1641	&	17	&\cellcolor[gray]{0.8}&	110.4587	&	0.4627	&	8	\\
$\mbox{RR}(d; \lambda)$	&	\textbf{15.6207}	&	0.1628	&	17	&\cellcolor[gray]{0.8}&	110.4581	&	0.4627	&	8	\\
$\mbox{RR}(\gamma; \lambda)$	&	15.6532	&	0.1652	&	17	&\cellcolor[gray]{0.8}&	\textbf{110.1557}	&	0.4630	&	8	\\ \hline
PCR	&	15.9962	&	0.1671	&	13.14	&\cellcolor[gray]{0.8}&	110.1968	&	0.4633	&	8	\\ 
$\mbox{PCR}(d; c)$	&	16.2175	&	0.1594	&	9.98	&\cellcolor[gray]{0.8}&	123.6064	&	0.5406	&	6	\\
$\mbox{PCR}(d; a, c)$	&	15.7529	&	0.1772	&	17	&\cellcolor[gray]{0.8}&	 110.1903	&	0.4633	&	8	\\
$\mbox{PCR}(\gamma; c)$	&	21.6077	&	0.1793	&	1	&\cellcolor[gray]{0.8}&	137.0467	&	2.0446	&	2.99	\\
$\mbox{PCR}(\gamma; a, c)$	&	\textbf{15.6596}	&	0.1747	&	17	&\cellcolor[gray]{0.8}&	\textbf{110.1852}	&	0.4631	&	8	\\
\hline
\end{tabular}
% \rule{8.2in}{.03cm}
\label{tbl3-datasets}
% \end{small}
\end{table}
% \end{landscape}

While most methods provide largely similar results, some details are noteworthy. For ridge regression, $\mbox{RR}(d; \lambda)$ outperforms the original ridge regression slightly but it is much faster in computation time. Comparatively, $\mbox{RR}(\gamma; \lambda)$ improves the prediction accuracy by basing the weights on $\gamma_j$'s for the \texttt{concrete} data, where the top components are not the most relevant to the response as shown in Figure \ref{fig04}.  
Among the PCR models, both $\mbox{PCR}(d; a, c)$ and  $\mbox{PCR}(\gamma; a, c)$ are among top performers in terms of prediction. 

Neither $\mbox{PCR}(d; c)$ nor $\mbox{PCR}(\gamma; c)$ perform as well as others in terms of prediction accuracy owing to their different emphasis. 
Concerning component selection, $\mbox{PCR}(\gamma; c)$ yields simpler models than $\mbox{PCR}(d; c)$ and PCR. This is determined by the nature of each method and data set. Referring to Figure  \ref{fig04}, $\mbox{PCR}(\gamma; c)$ clearly helps extract parsimonious models with simpler structures.

\section{Discussion}
\label{sec-discussion}

We have proposed a new way of constructing predictive models based on
orthogonal components extracted from the original data. The
approach makes good use of the natural monotonicity associated
with those orthogonal components. It allows efficient
determination of the tuning parameters. The approach results in
several interesting alternative models to RR and PCR. These new
variants make improvement on either predictive performance or
selection of the components. Overall speaking, $\mbox{RR}(\gamma;
\lambda)$, $\mbox{PCR}(d; a, c)$, and  $\mbox{PCR}(\gamma; a, c)$
are highly competitive in terms of predictive performance.
$\mbox{PCR}(\gamma; c)$ better aims for model parsimony  by making
selection on the basis of association with the response.

WOCR can be implemented with more flexibility. First of all, we
have advocated the use of logistic or expit function in regulating
the weights. The logistic function $\expit\{a(x-c)\}$ is
rotationally symmetrical about the point $(c, 0.5)$. To have more
flexible weights, we may consider a generalized version of the
expit function, $\mbox{gexpit}(x; a, b, c) = 1/ \left[ 1 + b \,
\exp \{-a(x-c)\} \right].$ The range of the gexpit function
remains (0, 1). Since its value at $x=c$ is now $1/(1+b),$ the
parameter $b >0$ changes the rotational symmetry unless $b=1.$
Secondly, selecting the number of principal components is a major
concern in PCR. We have used BIC in both $\mbox{PCR}(d; c)$ and
$\mbox{PCR}(\gamma; c)$ for this purpose. BIC is derived in the
fixed dimensional setting (i.e., fixed $p$ and $n \rightarrow
\infty$). It is worth noting that the dimension in the WOCR family
is $m$ instead of $p$. If $m$ is close to $n$, the modified or
generalized BIC (see, e.g., \citeauthor{Chen.2008},
\citeyear{Chen.2008}) can be used instead. In particular, the
complexity penalty $\left[\ln\{\ln(m)\} \ln(n)\right]$ suggested
by \citet{Wang.2009} to replace $\ln(n)$ in (\ref{BIC-c-PCR}) for
diverging dimensions fits well for WOCR models since the dimension
$m$ cannot exceed $n$. If there is prior information or belief
that the optimal $k$ is less than some pre-specified number, it is
helpful to further restrain the search range of $c$ on the basis of $\{d_j: j=1, \ldots, m\}.$

The WOCR model framework generates several future research
revenues. First of all, WOCR can be directly applicable to
regression with components after a varimax rotation
\citep{Kaiser.1958}. WOCR can also be extended to PLSR and CR
models. In those approaches, extraction of the orthogonal
components takes associations with the response into
consideration; thus both matrices $\mathbf{A}$ and $\mathbf{W}$
relate to $\mathbf{y}.$ To select the tuning parameter, $v$-fold
cross validation can be conveniently used on the basis of
Equations (\ref{beta-WOCR}) and (\ref{y-pred}). To implement
pre-tuning, finding the degrees of freedom involved in these
approach becomes more complicated but remains doable by following
\citet{Kramer.2011}. The weighting and pre-tuning strategy
introduced in WOCR may help make improvement in terms of
predictive accuracy, computational speed, and model parsimony for
these models. Secondly, the simulation results for Model B in
(\ref{modelB}) and Model C in (\ref{modelC}) with $p=50$ presented
in Section \ref{sec-simulation-predictive}  highlight the variable
selection issue in high-dimensional modeling. To this end,
\citet{Bair.2006} considered a univariate screening step;
\citet{Ishwaran.2014} showed the generalized ridge regression
\citep{Hoerl.1970} can help suppress the influence of unneeded
predictors in certain conditions. Both approaches may be
incorporated into WOCR to improve its predictive ability. Finally,
WOCR can be extended to generalized linear models, e.g., via a
local quadratic approximation of the log-likelihood function. The
kernel trick (see, e.g., \citealt{Rosipal.2001},
\citealt{Rosipal.2002}, and \citealt{Lee.2013}) can be integrated
into WOCR as well.

\iffalse
\vspace{.5in}
\appendix
\begin{center}
{\Large APPENDIX}
\end{center}
%-----------------------------------------------------------------------------------------------------------------------------------------%
\section{Proofs}
\fi

%% If you have bibdatabase file and want bibtex to generate the
%% bibitems, please use
%%
%%  \bibliographystyle{elsarticle-harv}
%%  \bibliography{<your bibdatabase>}

%% else use the following coding to input the bibitems directly in the
%% TeX file.

% \vspace{.3in}

% \newpage

\end{document}